\newcommand{\fig}[1]{Fig.~\ref{#1}}
\newcommand{\tab}[1]{Table~\ref{#1}}
\newcommand{\eq}[1]{(\ref{#1})}
\newtheorem{thm}{Theorem}[section]
\newtheorem{lem}[thm]{Lemma}
\title{\LARGE \bf
Chance-Constrained Optimization in Contact-Rich Systems\\ for Robust Manipulation 
}
\author{Yuki Shirai$^{\dagger}$, Devesh K. Jha$^{\ddagger}$, Arvind U. Raghunathan$^{\ddagger}$, Diego Romeres$^{\ddagger}$%
\thanks{$^{\dagger}$ Yuki Shirai is with the Department of Mechanical and Aerospace Engineering, University of California, Los Angeles, CA, USA 90095 {\tt\small yukishirai4869@g.ucla.edu}}%
\thanks{$^{\ddagger}$Devesh K. Jha, Arvind U. Raghunathan and Diego Romeres are with Mitsubishi Electric Research Laboratories (MERL), Cambridge, MA, USA 02139 {\tt\small \{jha,raghunathan,romeres\}@merl.com}}}%
\begin{document}

\maketitle
\thispagestyle{empty}
\pagestyle{empty}
\begin{abstract}
 This paper presents a chance-constrained formulation for robust trajectory optimization during manipulation. In particular, we present a chance-constrained optimization for Stochastic Discrete-time Linear Complementarity Systems (SDLCS). To solve the optimization problem, we formulate Mixed-Integer Quadratic Programming with Chance Constraints (MIQPCC). In our formulation, we explicitly consider joint chance constraints for complementarity as well as states to capture the stochastic evolution of dynamics.
We evaluate robustness of our optimized trajectories in simulation on several systems. The proposed approach outperforms some recent approaches for robust trajectory optimization for SDLCS. 
\end{abstract}
\section{Introduction}\label{sec:introduction}


Contacts are central to manipulation problems. Consequently, contact modeling has been an active area of research in robotics since the last several decades 
\cite{todorov2010implicit, drumwright2011evaluation, drumwright2010modeling, anitescu1997formulating, 9113247}.
One of the most popular approaches to model contact dynamics is using Linear Complementarity Problem (LCP). LCP models are widely used for modeling contact dynamics in academia as well as in several physics simulation engines such as Bullet, ODE, etc. Trajectory optimization (TO) of LCP-based contact models has been used for manipulation~\cite{DBLP:journals/corr/abs-2106-03220, jin2021trajectory} and legged locomotion~\cite{posa2014direct}. Lyapunov stability of linear systems with complementarity systems has also been studied~\cite{CamlibelPangShen,DBLP:journals/corr/abs-2008-02104,RaghunathanLinderoth}. However, most of these works assume deterministic contact models to perform TO. In reality, frictional interaction systems suffer from several uncertainties which lead to stochastic dynamics and thus, it is important to consider uncertainty during TO. Modeling uncertainty in LCP-based contact models leads to Stochastic Discrete-time Linear Complementarity System (SDLCS). 
\begin{figure}
    \centering
    \includegraphics[scale=0.45]{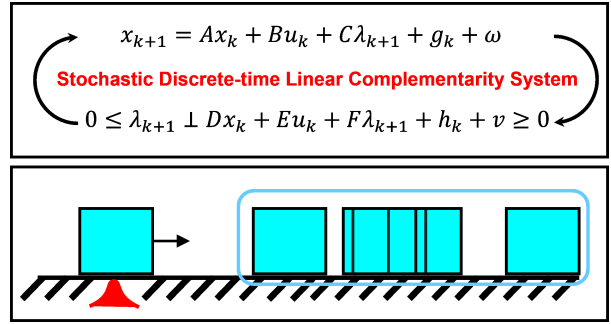}
    \caption{This paper presents chance-constrained optimization for SDLCS. The figure shows the case of a sliding box on a plane where the coefficient of friction is a Gaussian random variable.  Note that $w$ and $v$ are additive uncertainty terms.}
    \label{fig:stochastic_compl_constraint}
\end{figure}

\fig{fig:stochastic_compl_constraint} pictorially shows a SDLCS we study in this paper. We consider the SDLCS that has uncertainty in parameters and additive noises in dynamics and complementarity constraints.  As shown in \fig{fig:stochastic_compl_constraint}, one should notice that uncertainty leads to stochastic evolution of system states in SDLCS. Thus, a robust optimization formulation should consider the uncertainty in state evolution. In some recent works that consider stochastic complementarity constraints, an expected residual minimization (ERM)-based~\cite{chen2005expected} penalty is used to solve the robust optimization problem~\cite{drnach2021robust}. A major shortcoming of such an approach is that it fails to capture the stochastic evolution of system dynamics due to the stochastic complementarity constraint. 
In~\cite{DBLP:journals/corr/abs-2105-09973}, the authors augment the formulation in \cite{drnach2021robust} with chance constraints. However, this formulation has certain fundamental shortcomings which prevent constraint satisfaction guarantees. We present a formulation that circumvents these shortcomings by using a mixed integer formulation. Using some relaxation of the original joint chance complementarity constraint problem for the SDLCS, the resulting problem can be solved using mixed integer programming.


 In this paper, we present a formulation of robust trajectory optimization for SDLCS. 
 \textcolor{black}{Since worst-case robust optimization is quite conservative and does not explicitly discuss stochastic evolution of states \cite{ben2009robust}, this work considers probabilistic optimization with stochastic evolution of states.}
 Robustness to uncertainty is provided by enforcing probabilistic satisfaction of state constraints. Under certain simplifications, we show that the chance-constrained problem can be reformulated as a Mixed Integer Quadratic Program with Chance Constraints. 

\textbf{Contributions.} This paper has the following contributions:
\begin{enumerate}
    \item We present a novel formulation for chance-constrained optimization of SDLCS. 
    \item We compare our proposed approach with several previously proposed techniques and demonstrate that our method outperforms the recent techniques in \cite{drnach2021robust, DBLP:journals/corr/abs-2105-09973}.
\end{enumerate}
The proposed algorithm is demonstrated on several manipulation systems with linear dynamics. 

\section{Related Work}\label{sec:related_work}
In this section, we review some of the work which is most close to the work presented in this paper. Our work is closely related to TO techniques for contact-rich systems. Contact-implicit TO techniques are becoming very popular for performing TO for contact-rich systems, and several techniques have been proposed for manipulation as well as legged locomotion~\cite{DBLP:journals/corr/abs-2106-03220,posa2014direct, manchester2020variational}. All the above techniques assume perfect model knowledge and do not consider uncertainty. 

There has been some recent work on robust TO in SNCS~\cite{drnach2021robust, DBLP:journals/corr/abs-2105-09973, yuki2022pivot}.  In~\cite{drnach2021robust}, the authors have utilized the formulation of ERM for robust TO. ERM, first introduced in~\cite{chen2005expected} for Stochastic Linear Complementarity Problem (SLCP), aims at minimizing the expected error in satisfying the SLCP. 
In~\cite{drnach2021robust}, authors use ERM as an additional penalty term in their TO problem. However, such a formulation does not consider the stochastic state evolution of the system during optimization. 
A chance-constrained formulation for SNCS is presented in~\cite{DBLP:journals/corr/abs-2105-09973}. This method augments the ERM-augmented objective in~\cite{drnach2021robust} with additional chance constraints on satisfying the complementarity constraints. 
The formulation ignores the stochastic evolution of system state during optimization, and thus borrows the limitations of~\cite{drnach2021robust}. Furthermore, this formulation is incapable of enforcing a constraint violation probability smaller than $0.5$ for any degree of uncertainty. Consequently, this method is very fragile for trajectories with the horizon, $N>1$ as the chance of violating the constraints for such trajectories is $0.5N \geq 1$ \cite{doi:10.1287/opre.36.1.145}. Our formulation addresses these weaknesses under certain simplifying assumptions for SDLCS. 

Another line of work which is relevant to understand some of our proposed work is related to chance-constrained optimization (CCO). This has been extensively studied in robotics as well as optimization literature\cite{5970128, 8651541, 9145612}. In \cite{5970128}, authors have proposed stochastic optimization formulation for open-loop collision avoidance problems using chance constraints under Gaussian noise.  \cite{9145612} uses statistical moments of the distribution to handle non-Gaussian chance constraints. An important point to note here is that in all CC formulation for dynamic optimization, one needs to consider the CDF function for the joint probability distribution of all variables. However, such distribution is generally extremely challenging to compute.
Thus, in general, the joint chance constraint is decomposed into individual chance constraints using  Boole's inequality (see~\cite{5970128,9145612}), which results in very conservative approximation of the individual constraints. Our formulation utilizes Boole's inequality to convert the original computationally intractable joint chance constraints into conservative but tractable independent chance constraints.  


\section{Problem Preliminary}\label{sec:prob_formulation}
For the completeness of the paper, we provide a brief introduction to linear complementarity problem and its stochastic form. This is followed by a problem formulation for robust trajectory optimization for linear dynamical systems with stochastic complementarity solution. We also point several key differences of our approach from previous attempts for robust trajectory optimization for stochastic complementarity system.
\subsection{Discrete-time Linear Complementarity System (DLCS)}\label{sec:define_lcp}

A DLCS is a discrete-time linear dynamical system with complementarity constraints~\cite{RaghunathanLinderoth} given by:
\begin{subequations}
\begin{flalign}
x_{k+1}=A x_k+B u_k+C \lambda_{k+1}+g_k \label{dlcs.dyn} \\
0 \leq \lambda_{k+1} \perp D x_k+E u_k+F \lambda_{k+1}+h_k \geq 0 \label{dlcs.lcp}
\end{flalign}\label{dlcs}
\end{subequations}
where $k$ is the time-step index, $x_k\in \mathbb{R}^{n_{x}}$ is the state, $u_k\in \mathbb{R}^{n_{u}}$ is the control input, and $\lambda_k \in \mathbb{R}^{n_{c}}$ is the algebraic variable (e.g., contact forces). In addition, $A \in \mathbb{R}^{n_{x} \times n_{x}}$, $B \in \mathbb{R}^{n_{x} \times n_{u}}$, $C \in \mathbb{R}^{n_{x} \times n_{c}}$, $g_k \in \mathbb{R}^{n_{x}}$, $D \in \mathbb{R}^{n_{c} \times n_{x}}$, $E \in \mathbb{R}^{n_{c} \times n_{u}}$, $F \in \mathbb{R}^{n_{c} \times n_{c}}$, and $h_k \in \mathbb{R}^{n_{c}}$. The $i$-th element of vector $p_k$ ($p_k$ can be $x_k, u_k, \lambda_k$) is represented as $p_{k,i}$.
The $i$-th diagonal element of matrix $P_k$ is represented as $P_{k,ii}$.
The notation $0 \leq a \perp b \geq 0$ denotes the complementarity constraints $a \geq 0, b \geq 0, a b=0$.

Given a $x_k,u_k$, an unique solution $\lambda_{k+1}$ to~\eqref{dlcs.lcp} exists if $F$ is P-matrix \cite{lcpbook}. If $F$ does not satisfy the P-matrix property, it is possible that $\lambda_{k+1}$ satisfying~\eqref{dlcs.lcp} is non-unique or non-existent. 

\subsection{Contact-Implicit Trajectory Optimization}
A contact-implicit trajectory optimization for the DLCS can be formulated as:
\begin{subequations}
\begin{align}
\min _{x, u, \lambda} &\; \sum_{k=0}^{N-1} J(x_k, u_k, \lambda_k)\\
\text{s. t. } &\; x_{k+1}=A x_k+B u_k+C \lambda_{k+1}+g_k, \\
&\; 0 \leq \lambda_{k+1} \perp D x_k+E u_k+F \lambda_{k+1}+h_k \geq 0,\\
&\; \quad x_{0} = x_s, x_N = x_g, 
x_{k} \in \mathcal{X}, u_{k} \in \mathcal{U}, \lambda_k \leq \lambda_{u}
\end{align}
\label{equation_lcp}
\end{subequations}
where $x_s, x_g$ represent the initial and the terminal values, respectively, $\mathcal{X} \subseteq \mathbb{R}^{n_{x}}$ and $\mathcal{U} \subseteq \mathbb{R}^{n_{u}}$ are convex polytopes consisting of a finite number of linear inequality constraints, $\lambda_{u}$ is the upper bound of $\lambda_k$, and $N$ is the time horizon.

While \eq{equation_lcp} is widely used in various robotic applications, it can be fragile under uncertainty, which is often the case in model-based manipulation. Hence, we consider a novel formulation of \eq{equation_lcp} so that the generated trajectory from the optimization would be robust under uncertainty.

\subsection{Stochastic Discrete-time Linear Complementarity Systems (SDLCS)}\label{SDLCS_sec}
We consider the following SDLCS, i.e. DLCS with uncertainty:
\begin{subequations}
\begin{align}
x_{k+1}=A x_k+B u_k+C \lambda_{k+1}+g_k + w_k \label{slcp1} \\
0 \leq \lambda_{k+1} \perp y_{k+1} \geq 0 \label{slcp2}
\end{align}
\label{SDLCS_equations}
\end{subequations}
where $y_{k+1} = D x_k+E u_k+F \lambda_{k+1}+h_k + v_k$. $w_{k} \in \mathbb{R}^{n_{x}}, v_{k} \in \mathbb{R}^{n_{c}}$ are known additive uncertainty. 
We consider the case where the coefficient matrix $C$ in \eq{slcp1} and $F$ in \eq{slcp2} are  stochastic matrices to discuss a more realistic stochastic effect due to complementarity constraints. This corresponds to the case when one might have uncertainty arising from parameter identification leading to a SDLCS. An alternative to this is to allow the complementarity variable $\lambda_{k+1}$ to be stochastic. However, such treatment is out of the scope of the current work. 
%
 Our treatment of SDLCS leads to stochastic evolution of system states $x_k$, while we treat $\lambda_{k+1}$ as deterministic. The assumption of determinacy in $\lambda_{k+1}$ is similar to several previous works \cite{chen2005expected}, \cite{drnach2021robust}, \cite{DBLP:journals/corr/abs-2105-09973}, \cite{Tassa-RSS-10}. 

The authors in~\cite{drnach2021robust} use ERM to solve TO of SDLCS and use the following cost function:
\begin{equation}
\sum_{k=0}^{N-1}\left(J\left(x_{k}, u_{k}, \lambda_{k+1}\right)+\beta \mathbb{E}\left[\left\|\psi\left(\lambda_{k+1}, y_{k+1}\right)\right\|^{2}\right]\right)
\label{erm_gatech}
\end{equation}
where $\psi$ is an Nonlinear Complementarity Problem (NCP) function, $\beta$ is a weighting scalar. We compare the robustness of our formulation with \eq{erm_gatech} in Sec~\ref{sec:result}.

\section{Robust Trajectory Optimization for SDLCS}\label{sec:robust_to}



In this section, we describe our formulation for robust TO of SDLCS. 
We consider:
%
\begin{subequations}
\begin{align}
\min _{x, u, \lambda} &\; \mathbb{E} \left[\sum_{k=0}^{N-1} J(x_k, u_k, \lambda_k)\right]\\
\text{s. t. } &\; x_{k+1}=A x_k+B u_k+C \lambda_{k+1}+g_k + w_k,\label{dynamics_s}\\ 
&\; \text{Pr} \left(0 \leq \lambda_{k+1} \perp y_{k+1} \geq 0, x_k \in \mathcal{X}, \forall k \right) \geq 1-\Delta,\label{condition_s}\\
&\; x_{0} \sim \mathcal{N}\left(x_{s}, \Sigma_{s}\right), u_{k} \in \mathcal{U},  \lambda_k \leq \lambda_{u}
\end{align}
\label{equation_slcp}
\end{subequations}
where $\text{Pr}$ denotes the probability of an event and $\Delta \in \left(0, 0.5\right]$ is the user-defined maximum violation probability, where the probability of violating constraints is bounded by $\Delta$. $x_s, \Sigma_s$ are the mean and covariance matrix of the state at $k=0$. $\mathcal{X}$ and $\mathcal{U}$ are convex polytopes, consisting of a finite number of linear inequality constraints.  
In Sec~\ref{ccc_section}, we describe how we convert~\eqref{condition_s} to a tractable optimization problem.

For clarity of presentation, we explain the reasoning behind our formulation shown in~\eqref{equation_slcp}.
Since the underlying SDLCS is uncertain (also see \fig{fig:stochastic_compl_constraint}), we consider a chance-constrained formulation for optimization \textcolor{black}{ to capture stochastic evolution of states (see discussion in Sec~\ref{sec:introduction})} where we impose multiple constraints simultaneously. This is represented as joint chance constraints for the complementarity constraints as well as the states, which is succinctly written in Equation~\eqref{condition_s}). Note that we represent the chance constraints on all the variables jointly (as is common in stochastic optimization for dynamic systems) using the cdf for the state as well as complementarity variables. We show in the rest of this section how the joint constraints can be decomposed into individual chance constraints using Boole's inequality.
It is also important to note that unlike~\eqref{equation_slcp}, the method in~\cite{drnach2021robust,DBLP:journals/corr/abs-2105-09973} fails to capture the stochastic evolution of states in their formulation.

In this work, we make the following assumptions for~\eqref{equation_slcp}:
\begin{enumerate}\label{assm}
    \item Noise terms $w_k$, $v_k$ follow Gaussian distribution.
    \item The complementarity variable $\lambda_{k+1}$ is deterministic.
    \item Each element of vectors $C\lambda_{k+1}$ and $F\lambda_{k+1}$ are independent Gaussian variables.
\end{enumerate}
 We explain the rationale for above assumptions in Sec~\ref{CCC_explanation}.

\subsection{Joint Linear Chance Constraints}\label{ccc_section}
We consider \textit{joint} chance constraint 
such that multiple constraints are satisfied simultaneously with a prespecified probability.  More specifically, we consider the joint chance constraint \eq{condition_s} so that the complementarity constraints \textit{and}  state bound constraints over the whole time horizon of the optimized trajectory are satisfied with probability $1-\Delta$.
 We denote the complementarity relationship in~\eq{slcp2} succinctly as $(\lambda_{k+1,i},y_{k+1,i}) \in \mathcal{S}$ for $i = 1,\ldots,n_c$. 
Hence,  in this optimization problem \eq{equation_slcp}, we have the following joint chance constraints:
\begin{equation}
\begin{split}
&  \text{Pr} \left(0 \leq \lambda_{k+1} \perp y_{k+1} \geq 0, x_k \in \mathcal{X}, \forall k \right) \geq 1-\Delta \Longleftrightarrow\\
&\text{Pr}\left(\bigwedge_{k=0}^{N} \left(\bigwedge_{i=1}^{n_c} (\lambda_{k+1, i}, y_{k+1, i}) \in \mathcal{S} \right)
\bigwedge \left(\bigwedge_{l=1}^{L}a_l^\top x_k  \leq b_l  \right) \right) \\ &\geq 1-\Delta
\label{joint_CC}
\end{split}
\end{equation}
where $\bigwedge$ is the logical AND operator. $L$ represents the number of chance constraints involving $x$ at $k$, except for the complementarity constraints. $a_l \in \mathbb{R}^{n_{x}}$ is the constant vector and $b_l$ is a scalar.  

Obtaining a cumulative distribution function (cdf) of \eq{joint_CC} is challenging because the joint probability of states and complementarity variables is considered. 
\textcolor{black}{The only way to decompose joint chance constraints is Boole's inequality \cite{doi:10.1287/opre.36.1.145} that converts the original computationally intractable joint chance constraints into conservative but tractable independent constraints.}
Hence, \textcolor{black}{similar to previous works,} we employ Boole's inequality \cite{doi:10.1287/opre.36.1.145} to get the conservative approximation of \eq{joint_CC} as follows:
\begin{equation}
\begin{split}
\text{Pr}\left(\bigwedge_{k=0}^{N} \left(\bigwedge_{i=1}^{n_c} (\lambda_{k+1, i}, y_{k+1, i}) \in \mathcal{S} \right)\right) \geq 1- \Delta_1, \\
\text{Pr}\left(\bigwedge_{k=0}^{N} \left(\bigwedge_{l=1}^{L}a_l^\top x_k  \leq b_l\right)\right) \geq 1-\Delta_2, \Delta_1 = \Delta_2 = \frac{\Delta}{2} 
\label{joint_CC1}
\end{split}
\end{equation}
Using Boole's inequality again, we can further obtain the conservative chance constraints given by:
\begin{subequations}
\begin{flalign}
\text{Pr}\left( (\lambda_{k+1, i}, y_{k+1, i}) \in \mathcal{S} \right) \geq 1- \frac{\Delta_1}{Nn_c},  \label{8a}\\
\text{Pr}\left(a_l^\top x_k  \leq b_l\right) \geq 1-\frac{\Delta_2}{NL},
\Delta_1 = \Delta_2 = \frac{\Delta}{2} \label{8b}
\end{flalign}
\label{joint_CC2}
\end{subequations}
\textcolor{black}{We discuss how to handle \eq{8a} in Sec~\ref{CCC_explanation}. We formulate \eq{8b} as its \textcolor{black}{equivalent} deterministic form:} 
\begin{subequations}
\begin{flalign}
   \text{Pr}\left(a_l^\top x_k  \leq b_l\right) \geq 1-\frac{\Delta_2}{NL} \Longleftrightarrow \\
a_l^{\top} \bar{x}_k \leq b_l-\sqrt{a_l^{\top} \Sigma_{x_k} a_l} {\Phi}^{-1}(1-\frac{\Delta_2}{NL}) \label{deterministic_CC}
\end{flalign}
\label{chance_analytic}
\end{subequations}
where $\bar{x}_k, \Sigma_{x_k}$ are the mean and covariance matrix of $x_k$, respectively. $\Phi^{-1}$ is an inverse of the cdf of the standard normal distribution. 


\subsection{Chance Complementarity Constraints (CCC) for SDLCS}\label{CCC_explanation}
We make the assumptions as specified in Sec~\ref{assm}.
While a more general formulation could allow the complementarity variable $\lambda_{k+1}$ to be stochastic, we do not consider it here. However, we believe that allowing $C$ and $F$ to be stochastic can achieve a similar effect in SDLCS. Furthermore, in cases where the distribution of $\lambda_{k+1}$ is known, our proposed formulation can be easily extended to incorporate stochasticity in $\lambda_{k+1}$. However, for brevity, we skip these details. 
The Gaussian assumption on uncertainty is made primarily to allow equivalent reformulation of the chance constraints to deterministic inequalities. 

While \cite{DBLP:journals/corr/abs-2105-09973} proposed a promising CCC, their formulation possesses empty solutions when $\Delta \leq 0.5$ (see~\cite{DBLP:journals/corr/abs-2105-09973}).
%
This can result in a very fragile trajectory since the total violation probability over $N$ steps would be always more than 1 if $N\geq 1$ (using Boole's inequality). 
This is because they use a Non-Linear Programming (NLP) formulation which needs to impose all CCC constraints simultaneously which compete with each other.  

  \begin{figure}[t]
    \centering
    \includegraphics[width=0.33\textwidth]{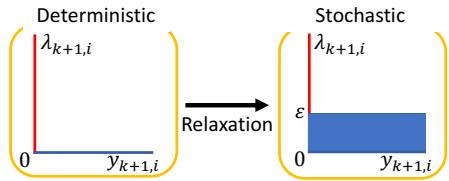}
     \caption{Deterministic and stochastic complementarity constraints. We have the complementarity constraints $0 \leq \lambda_{k+1, i} \perp y_{k+1, i} \geq 0$ where $y_{k+1, i}$ has uncertainty and accepts the violation of $\epsilon$.}
     \label{concept_figure}
\end{figure} 

In our formulation, we decompose stochastic complementarity constraints into two modes  (see \fig{concept_figure}) as follows:
\begin{subequations}
\begin{flalign}
    \text{Pr}\left((\lambda_{k+1,i}, y_{k+1,i}) \in \mathcal{S} \right) \geq 1- \theta  \\
    \Longleftrightarrow \text{Pr} \left( \begin{aligned} 
    \left(\lambda_{k+1, i} \geq 0, y_{k+1, i} = 0  \right) \\
    \bigvee \left(\lambda_{k+1, i} = 0, y_{k+1, i} \geq 0   \right)  \end{aligned} \right) \geq 1 - \theta \\
\Longleftrightarrow \left\{ \begin{aligned} \lambda_{k+1, i} \geq 0, \text{Pr}\left( y_{k+1, i} =0  \right) \geq 1 - \theta \\ \text{or } \lambda_{k+1, i} =0, \text{Pr}\left(y_{k+1, i} \geq 0 \right) \geq 1 - \theta \end{aligned} \right.
\label{disjunctiveset}
\end{flalign}
\end{subequations}
where $\theta = \frac{\Delta_1}{Nn_c}$. Note that now $y_{k+1} \sim \mathcal{N}\left(\bar{y}_{k+1}, \Sigma_{y_{k+1}}\right)$.
To realize lower violation probabilities, we first propose the following CCC using MIP as:
\begin{subequations}
\begin{flalign}
 z_{k, i, 0} \geq 0, \Longrightarrow \lambda_{k+1, i} \geq 0, \text{Pr}\left( y_{k+1, i} =0  \right) \geq 1 - \theta,\label{ccc_ours1}\\
z_{k, i, 1} \geq 0, \Longrightarrow \lambda_{k+1, i} =0, \text{Pr}\left(y_{k+1, i} \geq 0 \right) \geq 1 - \theta \label{ccc_ours2}
\end{flalign}
\label{ccc_ours_0}
\end{subequations}
where $z_{k, i, 0}, z_{k, i, 1}$ represent the integer variables to represent the two modes which satisfies $z_{k,i, 0} + z_{k,i, 1} = 1$ for $i$-th complementarity constraint at instant $k$.  

However, $\text{Pr}\left( y_{k+1, i} =0  \right)$ is zero (as probability measure for singleton sets is zero) so that we cannot directly use \eq{ccc_ours_0}. To alleviate this issue while avoiding negative values for $\lambda$, we propose the following CCC using a relaxation for complementarity constraints (see \fig{concept_figure}): 
\begin{subequations}
\begin{flalign}
z_{k, i, 0} \geq 0, \Longrightarrow \lambda_{k+1, i} \geq 0, \text{Pr}\left(0 \leq y_{k+1, i} \leq \epsilon \right) \geq 1 - \theta,\label{ccc_ours11}\\
 z_{k,i, 1} \geq 0, \Longrightarrow \lambda_{k+1,i} =0, \text{Pr}\left(y_{k+1, i} \geq \epsilon \right) \geq 1 - \theta \label{ccc_ours22}
\end{flalign}
\label{ccc_ours}
\end{subequations}
where  $\epsilon > 0$ is the acceptable violation in the complementarity constraints. 

We have two-sided linear chance constraints in \eq{ccc_ours11}. We decompose \eq{ccc_ours11} as two one-sided chance constraints so that we can use the same reformulation in \eq{chance_analytic}. Note that each one-sided chance constraints, obtained from the two-sided chance constraint, are formulated with a maximum violation probability of $\frac{\theta}{2}$.

Since we have integer constraints, MIP can impose individual constraints for each mode. Thus, we do not need to impose all constraints simultaneously like the NLP formulation in~\cite{DBLP:journals/corr/abs-2105-09973}. This provides a lower bound for $\theta$ as function of $\epsilon$, $\bar{y}_{k+1, i}$, and $\Sigma_{y_{k+1}, ii}$, which is presented as a lemma. 


\begin{lem}
Suppose the CCC are formulated as \eq{ccc_ours} and $\epsilon$, $\bar{y}_{k+1, i}$, and $\Sigma_{y_{k+1}, ii}$ are specified. Then (i) \eq{ccc_ours11} is feasible for all $\theta > 2(1-\Phi(\frac{\epsilon}{2\Sigma_{y_{k+1}, ii}}))$ and (ii) \eq{ccc_ours22} is feasible for all $\theta > 1 - \Phi \left((\bar{y}_{k+1, i} - \epsilon)/\Sigma_{y_{k+1} ii}\right)$.
\label{lemma_1}
\end{lem}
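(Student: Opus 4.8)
The plan is to prove the two claims separately, because they have different feasibility structure: part (ii) fixes $\lambda_{k+1,i}=0$ so the mean $\bar{y}_{k+1,i}$ is determined, whereas part (i) keeps $\lambda_{k+1,i}\ge 0$ free, so $\bar{y}_{k+1,i}$ can still be shaped by the decision variables. Throughout I would write $\sigma_i:=\sqrt{\Sigma_{y_{k+1},ii}}$ for the standard deviation of $y_{k+1,i}\sim\mathcal{N}(\bar{y}_{k+1,i},\Sigma_{y_{k+1},ii})$ (the scalar appearing inside $\Phi$ in the stated bounds), and I would use that $\Phi$ is continuous and strictly increasing, so $\Phi^{-1}$ is well defined and order-preserving.

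For part (ii) I would argue directly. Standardizing the scalar Gaussian gives $\text{Pr}(y_{k+1,i}\ge\epsilon)=1-\Phi\big((\epsilon-\bar{y}_{k+1,i})/\sigma_i\big)=\Phi\big((\bar{y}_{k+1,i}-\epsilon)/\sigma_i\big)$. Hence the requirement in \eqref{ccc_ours22}, namely $\text{Pr}(y_{k+1,i}\ge\epsilon)\ge 1-\theta$, holds exactly when $\Phi\big((\bar{y}_{k+1,i}-\epsilon)/\sigma_i\big)\ge 1-\theta$, i.e. when $\theta\ge 1-\Phi\big((\bar{y}_{k+1,i}-\epsilon)/\sigma_i\big)$. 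Since $\bar{y}_{k+1,i}$, $\epsilon$, and $\sigma_i$ are specified, this is precisely the claimed threshold, and this part reduces to a single standardization.

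For part (i) the first step is to apply Boole's inequality to the two-sided constraint in \eqref{ccc_ours11}, splitting $\text{Pr}(0\le y_{k+1,i}\le\epsilon)\ge 1-\theta$ into the two one-sided constraints $\text{Pr}(y_{k+1,i}\ge 0)\ge 1-\tfrac{\theta}{2}$ and $\text{Pr}(y_{k+1,i}\le\epsilon)\ge 1-\tfrac{\theta}{2}$, exactly the even split of the violation budget described in the text. Using the reformulation of \eqref{chance_analytic} on each, I would convert them to the deterministic linear bounds $\bar{y}_{k+1,i}\ge\sigma_i\Phi^{-1}(1-\tfrac{\theta}{2})$ and $\bar{y}_{k+1,i}\le\epsilon-\sigma_i\Phi^{-1}(1-\tfrac{\theta}{2})$. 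Because $\lambda_{k+1,i}\ge 0$ is free in this mode, $\bar{y}_{k+1,i}$ is choosable, so feasibility reduces to nonemptiness of the interval $[\sigma_i\Phi^{-1}(1-\tfrac{\theta}{2}),\,\epsilon-\sigma_i\Phi^{-1}(1-\tfrac{\theta}{2})]$.

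The decisive step is the nonemptiness characterization: the interval is nonempty iff $2\sigma_i\Phi^{-1}(1-\tfrac{\theta}{2})\le\epsilon$, i.e. $\Phi^{-1}(1-\tfrac{\theta}{2})\le\epsilon/(2\sigma_i)$; applying the increasing map $\Phi$ and rearranging yields $\theta\ge 2\big(1-\Phi(\epsilon/(2\sigma_i))\big)$, which is the stated bound, with the midpoint $\bar{y}_{k+1,i}=\epsilon/2$ as an explicit feasible witness. I expect the main obstacle to be conceptual rather than algebraic: one must justify cleanly why $\bar{y}_{k+1,i}$ is adjustable in part (i) but fixed in part (ii), and should flag that the bound is only sufficient (not exact), since Boole's inequality over-counts the overlap of the two tail events, so the true two-sided probability may already be attainable at slightly smaller $\theta$.
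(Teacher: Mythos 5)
Your proof is correct and takes essentially the same route as the paper's: split the two-sided constraint in \eq{ccc_ours11} into two one-sided constraints with violation budget $\theta/2$ each, convert everything to deterministic bounds on $\bar{y}_{k+1,i}$ via the Gaussian quantile reformulation \eq{chance_analytic}, and obtain the threshold from nonemptiness of the resulting interval in case (i) and directly from the specified mean in case (ii). Your added remarks --- that $\bar{y}_{k+1,i}$ acts as an adjustable quantity in mode (i) but is taken as given in mode (ii), and that the even $\theta/2$ split makes the bound in (i) sufficient rather than tight --- are sound clarifications of the same argument rather than a different approach.
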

\begin{proof}
Consider case (i): 
From \eq{deterministic_CC} and \eq{ccc_ours11}, the two-side chance constraints in \eq{ccc_ours11} are converted to their deterministic forms:$ \Sigma_{y_{k+1}, ii}  \Phi^{-1} \left(1-\theta/2\right) \leq \bar{y}_{k+1, i} \leq \epsilon -\Sigma_{y_{k+1}, ii}  \Phi^{-1} \left(1-\theta/2\right)$. To have a nonempty solution, $\epsilon - 2 \Sigma_{y_{k+1}, ii}  \Phi^{-1} \left(1-\theta/2\right) > 0$. 
Simplifying this equation, we obtain the bound specified in (i).
Consider case (ii): 
From \eq{deterministic_CC} and \eq{ccc_ours22}, the one-side chance constraints  in \eq{ccc_ours22} are converted as: $\bar{y}_{k+1, i} \geq \epsilon + \Sigma_{y_{k+1}, ii}\Phi^{-1} \left(1-\theta\right)$.
Simplifying this equation, we obtain the bound specified in (ii).
\end{proof}
%
\textit{Remark 1}:
From Lemma~\ref{lemma_1}, it is easy to show that $\theta < \frac{1}{2}$ if $\frac{\epsilon}{2\Sigma_{y_{k+1}, ii}} > \Phi^{-1}(\frac{3}{4})$ for case (i), and if $(\bar{y}_{k+1, i} - \epsilon)/\Sigma_{y_{k+1} ii} > \Phi^{-1}(\frac{1}{2})$ for case (ii). In contrast, the formulation in \cite{DBLP:journals/corr/abs-2105-09973} cannot enforce the chance constraints for any $\theta < 0.5$.


We use the following equations for uncertainty propagation in the SDLCS:
\begin{subequations}
\begin{flalign}
\bar{x}_{k+1}= A\bar{x}_{k} + Bu_k + \overline{C\lambda}_{k+1} + g_k + \bar{w}_k,\\
\Sigma_{x_{k+1}} = A \Sigma_{x_{k}} A^\top + \Sigma_{C\lambda_{k+1}} + W
\end{flalign}
\end{subequations}
where $W$ represents the noise covariance matrix and $\overline{C\lambda}_{k+1}$  represents a mean  of ${C\lambda}_{k+1}$. $\Sigma_{C\lambda_{k+1}} = \mathbb{E}\left[ \left(C\lambda_{k+1} - \overline{C\lambda}_{k+1}\right)\left( {C\lambda_{k+1} - \overline{C\lambda}_{k+1}}\right)^{\top}\right]$, which is a diagonal matrix because of the independence of random variables. Note that $\lambda_{k+1}$ is a decision variable. Consequently, we introduce another simplification by considering the worst-case uncertainty for $\lambda_{k+1}$ during uncertainty propagation. This conservative simplification offers computational advantages during the resulting optimization.



\subsection{Mixed-Integer Quadratic Programming with Chance Constraints (MIQPCC)}\label{MIQP_CCC}
In this section, we present our MIQPCC formulation to solve \eq{equation_slcp}. To impose our proposed CCC, one can solve either MIP or NLP. Our MIP-based method solves disjunctive inequalities while NLP needs to impose all CCC simultaneously, which yields an empty solution for $\Delta \leq 0.5$.


Our proposed MIQPCC is formulated as follows:
\begin{subequations}
\begin{flalign}
\min _{x, u, \lambda, z} \sum_{k=0}^{N-1} \bar{x}_{k}^{\top} Q \bar{x}_{k}+u_{k}^{\top} R u_{k}\\
\text{s. t. }\bar{x}_{k+1}= A\bar{x}_{k} + Bu_k + \overline{C\lambda}_{k+1} + g_k + \bar{w}_k,\\
\Sigma_{x_{k+1}} = A \Sigma_{x_{k}} A^\top + \Sigma_{w, C\lambda_{k+1}} + W,\label{eq000}\\
x_{0} \sim \mathcal{N}\left(x_{s}, \Sigma_{s}\right), u_{k} \in \mathcal{U},  \lambda_k \leq \lambda_{u},\label{eq001}\\
a_l^{\top} \bar{x}_k \leq b_l-\alpha\kappa, \label{eq002}\\
z_{k,i,0} + z_{k,i, 1} =1,\label{eq004}\\
0 \leq \lambda_{k+1, i} \leq M z_{k,i,0}, \label{eq005}\\
\zeta\psi z_{i,k,0} + (\epsilon+\eta\psi) z_{k,i,1} \leq  \bar{y}_{k+1, i} \\
\bar{y}_{k+1, i} \leq (\epsilon-\zeta\psi)z_{k,i,0} + Mz_{k,i,1} ,\label{eq006}
\end{flalign}
\end{subequations}
where $Q=Q^{\top} \geq 0,R=R^{\top} > 0$, $\alpha ={\Phi}^{-1}(1-\frac{\Delta}{2NL}),  \zeta = {\Phi}^{-1}(1-\frac{\Delta}{4Nn_c}), \eta={\Phi}^{-1}(1-\frac{\Delta}{2Nn_c}), \kappa = \sqrt{a_l^{\top} \Sigma_{x_k} a_l},  \psi = \sqrt{\Sigma_{y_{k+1, ii}}}$. $z_{k,i, 0}, z_{k,i, 1}$ are the binary decision variables for the $i$-th complementarity constraint at $k$ to represent mode 1, 2, respectively.  Using these binary variables, we employ big-M formulation to deal with disjunctive inequalities in our CCC. The parameter $M$ is a valid upper bound for $\lambda_k, y_k$. 
\section{Numerical Simulations}\label{sec:result}
We validate our proposed methods for three benchmark DLCS: a cartpole with softwalls, a sliding box with friction, and dual manipulators as illustrated in \fig{fig:example_systems}, inspired by \cite{DBLP:journals/corr/abs-2008-02104}. Through the experiments, we try to answer the following questions: 

%
%
\begin{enumerate}
    \item Can our proposed optimization generate robust open-loop trajectories?
    \item Can our proposed formulation satisfy the probabilistic constraints imposed during optimization?
    \item How does the proposed method compare against the previous methods for robust optimization in SDLCS?
\end{enumerate}

\subsection{Implementation Details}
We implemented our method in Python using Gurobi \cite{gurobi}
\textcolor{black}{to solve the proposed MIQP. We implemented the MPCC with PYROBOCOP \cite{DBLP:journals/corr/abs-2106-03220} to solve the ERM-based method in \cite{drnach2021robust} and the CCC method in \cite{DBLP:journals/corr/abs-2105-09973}.}
  The examples are implemented on a computer with Intel i7-8565U processor.

To verify the robustness of open-loop trajectories obtained from our proposed optimization, we use Monte Carlo simulations.
\textcolor{black}{We propagate the dynamics by finding the roots of the complementarity system with sampled parameters given the control sequence obtained from optimization. We run each case for 1000 trials with different sampled parameters to estimate the probability of failure.}
 Note that unlike the continuous-domain dynamics, we cannot rollout the dynamics for SDLCS with the given control sequences since we do not have the access to $\lambda_{k+1}$. We add the noise sampled from the distribution which was used during optimization.


\textcolor{black}{For simplicity, we show the continuous-time dynamics. We then discretize continuous-time dynamics into discrete-time dynamics using the explicit Euler method with sample time $dt=0.033$. For notation simplicity, we denote $x_0, \Sigma_0$ as mean and covariance matrix at $k=0$ for states of systems, respectively.}


\subsection{Example Details}

\subsubsection{Cartpole with Softwalls}

The continuous-time dynamics with complementarity constraints for the cartpole with softwalls is as follows:
\begin{subequations}
\begin{flalign}
&\dot{x}_{1}=x_{3},
\dot{x}_{2}=x_{4}, 
\dot{x}_{3}=g \frac{m_{p}}{m_{c}} x_{2}+\frac{1}{m_{c}} u_{1}, \\
&\dot{x}_{4}=\frac{g\left(m_{c}+m_{p}\right)}{l m_{c}} x_{2}+\frac{1}{l m_{c}} u_{1}+\frac{\lambda_{1}}{l m_{p}} -\frac{\lambda_{2}}{l m_{p}}, \\
&0 \leq \lambda_{1} \perp l x_{2}-x_{1}+\frac{1}{k_{1}} \lambda_{1}+d \geq 0, \\
&0 \leq \lambda_{2} \perp x_{1}-l x_{2}+\frac{1}{k_{2}} \lambda_{2}+d \geq 0
\end{flalign}
\label{cartpole-equation}
\end{subequations}
%
where $x_{1}$ is the cart position, $x_{2}$ is the pole angle, the $x_{3}$ and $x_{4}$ are their derivatives. $u_{1}$ is the control and $\lambda_{1}, \lambda_{2}$ are the reaction forces at from the wall 1, 2, respectively. We consider the additive noise $w$, the zero-mean i.i.d. Gaussian noise which standard deviation is $2\times 10^{-4}$, to $x_{1, k}, x_{2, k}$. $k_{1}=10, k_{2}=10$ are the stiffness of walls $1$ and  $2$, respectively.  In this example, we assume that the uncertainty also arises from the $\frac{1}{k_1}, \frac{1}{k_2}$ which standard deviations are $10^{-5}$. $g=9.81$ is the gravitational acceleration, $m_p=0.1, m_c=1.0$ are the mass of the  pole, cart, respectively. $l=0.5$ is the length of the pole and $d=0.15$ is the distance from the origin of the coordinate to the walls.  

The optimization setup is as follows. $N=20, M=100, Q=\text{diag}(0,0,0,0), R=0.01$, $\epsilon=0.002,$ $x_0 = [-0.15, 0, 0, 0]^\top, \Sigma_{0}=\text{diag}(0,0,0,0)$.
We also have the following chance constraints:  $\text{Pr} ( x_{1,k} \leq 0.05) \geq 1 - \frac{\Delta}{4N}, \text{Pr} ( x_{2,k} \leq 0.15) \geq 1 - \frac{\Delta}{4N},  \forall k = 0, \ldots, N-1$,  $\text{Pr} (-0.02 \leq x_{1, N} \leq 0.02) \geq 1 - \frac{\Delta}{4N}, \text{Pr} (-0.04 \leq x_{2, N} \leq 0.04) \geq 1 - \frac{\Delta}{4N}$. 


\begin{figure}[t]
    \centering
    \includegraphics[width=0.4\textwidth]{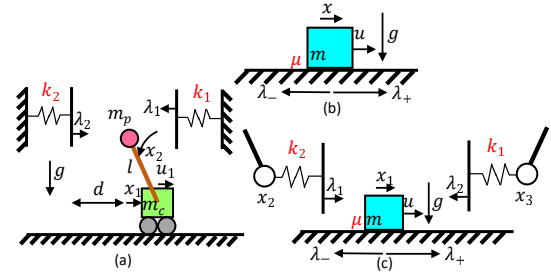}
    \caption{Problems described in Sec~\ref{sec:result}. (a) cartpole with softwalls, (b)sliding box with friction, and (c)dual manipulators.}
    \label{fig:example_systems}
\end{figure}

\subsubsection{Sliding Box with Friction}
The continuous-time \textcolor{black}{quasi-static}
dynamics with complementarity constraints for sliding box with Coulomb friction is as follows:
\begin{subequations}
\begin{flalign}
&\dot{x}_1=x_{2},
\alpha \dot{{x}}_1=u+\lambda_{+}-\lambda_{-}, \\
&0 \leq \gamma \perp \mu m g-\lambda_{+}-\lambda_{-} \geq 0, \\
&0 \leq \lambda_{+} \perp \gamma+u+\lambda_{+}-\lambda_{-} \geq 0, \\
&0 \leq \lambda_{-} \perp \gamma-u-\lambda_{+}+\lambda_{-} \geq 0
\end{flalign}
\label{pushing_dynamics}
\end{subequations}
%
$x_{1}$ is the box position and $x_{2}$ is the box velocity. $u$ is the control and $\lambda_{+}, \lambda_{-}$ are the positive and negative component of the friction force, respectively. $\gamma$ is the slack variable. $\alpha=4$ is the damping constant, $m=1$ is the mass of the box, and $\mu=0.1$ is the coefficient of friction.  We consider the additive  i.i.d. Gaussian noise $w$ as  ${x_{1, k+1}}=x_{1,k} + x_{2, k} dt + w$.   The standard deviations of $w$ is $4\times 10^{-4}$. $g=9.81$ is the gravitational acceleration. We assume that the uncertainty also arises from the $\mu$ which standard deviations are $10^{-5}$.


The optimization setup is as follows. $N=20, M=100, Q=\text{diag}(0,0,0,0), R=0.01$, $\epsilon = 0.01, x_0 = [1, -1]^\top, \Sigma_{0}=\text{diag}(0,0)$. 
We also have the following chance constraints:  $\text{Pr} ( x_{1,k} \geq 0.885) \geq 1 - \frac{\Delta}{2N}, \forall k = 0, \ldots, N-1$, $\text{Pr} (0.89 \leq x_{1, N} \leq 0.91) \geq 1 - \frac{\Delta}{2N}, \text{Pr} (-0.1 \leq x_{2, N} \leq 0.1) \geq 1 - \frac{\Delta}{2N}$.


\subsubsection{Dual Manipulators}
We consider the example where the box is manipulated by two manipulators with Coulomb friction and the contact forces from the manipulators. 
The continuous-time \textcolor{black}{quasi-static} dynamics is as follows:
\begin{equation}
\begin{aligned}
&\dot{x}_1=x_{2}, \alpha \dot{x}_{1}=\lambda_{1}-\lambda_{2}+\lambda_{+}-\lambda_{-},\\
&\dot{x}_3= x_{4},
\dot{x}_4 = u_{1},
\dot{x}_5= x_{6},
\dot{x}_6 = u_{2},\\
&0 \leq \lambda_{1} \perp x_{1}-x_{3}+\frac{1}{k} \lambda_{1} \geq 0, \\
&0 \leq \lambda_{2} \perp x_{5}-x_{1}+\frac{1}{k} \lambda_{2} \geq 0, \\
&0 \leq \gamma \perp \mu m g-\lambda_{+}-\lambda_{-} \geq 0, \\
&0 \leq \lambda_{+} \perp \gamma+\lambda_{1}-\lambda_{2}+\lambda_{+}-\lambda_{-} \geq 0, \\
&0 \leq \lambda_{-} \perp \gamma-\lambda_{1}+\lambda_{2}-\lambda_{+}+\lambda_{-} \geq 0
\end{aligned}
\label{dual_manipulation_eq}
\end{equation}
$x_{1}, x_{3}, x_{5}$ are the positions of the box, the left arm, the right arm, respectively and $x_{2}, x_{4}, x_{6}$ are their derivatives. $u_{1}, u_{2}$ represent the controls of the left and the right arm, respectively. $\lambda_{+}, \lambda_{-}$ are the positive and negative component of the friction force, respectively. $\gamma$ is the slack variable. $\lambda_{1}, \lambda_{2}$ are the contact forces from the left arm and the right arm, respectively. We set $g=9.81$, $m=1$, $k=100$, $\mu=0.1$. We discretize the dynamics \eq{dual_manipulation_eq} with $dt=0.033$ and add the zero-mean i.i.d. Gaussian noise $w$ which standard deviation is 0.0002 such as ${x_{1, k+1}}=x_{1,k} + x_{2, k} dt + w$. The standard derivation of $\mu$ and $\frac{1}{k}$  are 0.0001.


The optimization setup in this example is as follows. $N=20, M=50, Q=\text{diag}(0,0,0,0,0,0), R=\text{diag}(1,1), \epsilon=0.0042$, $x_0 =[0.1, -1.1, 0,0,0.1,0]^\top, \Sigma_{0}=\text{diag}(0,0,0,0,0,0)$. We have the following chance constraints:  $\text{Pr} ( x_{1,k} \geq -0.17) \geq 1 - \frac{\Delta}{2N}, \forall k = 0, \ldots,  N-1$, $\text{Pr} (-0.01 \leq x_{1, N} \leq 0.01) \geq 1 - \frac{\Delta}{2N}$. 

\subsection{Robustness of Open-Loop Trajectories}
The optimized control and state trajectories for the three systems using our proposed method are shown in  \fig{fig:cartpole_vioprob_control}-\fig{fig:2dmanipulation_vioprob_x}. Overall, these figures show that the planner generates state trajectories that are farther away from the bound specified in the chance constraints as the violation probability decreases.  For instance, \fig{fig:cartpole_vioprob_control} shows that the trajectories are farther away from $x=0.05$ as $\Delta$ decreases. In addition, the trajectory with $\Delta=0.02$ reaches its maximum value earlier than other trajectories to account for the evolution of the uncertainty. We observe the same behavior for the other example too. In addition, these figures illustrate that the control costs increase as $\Delta$ decreases. This illustrates the trade-off relation between safety and cost.

At this point, we would like to discuss the magnitude of uncertainty we consider in these problems. Compared to some other stochastic optimal control works \cite{5970128, 8651541}, the uncertainty in these problems is relatively smaller. There are several reasons why we need to have a smaller uncertainty. Note that as we have explained in Sec~\ref{sec:prob_formulation}, our approach satisfies joint constraints on multiple constraints together. First, our formulation has chance complementarity constraints in addition to chance constraints on states, which are commonly used. Our formulation has more number of chance constraints, and consequently, the lower uncertainty is required because of the conservative approximation of Boole's inequality to resolve joint chance constraints into individual constraints as explained  in Sec~\ref{ccc_section},  Sec~\ref{CCC_explanation}. Second, we need to have a small $\epsilon$ to avoid large violation of complementarity constraints, which requires small uncertainty. Finally, we would like to emphasize that allowing larger uncertainties requires either better resolution of joint chance constraints or covariance steering approaches \cite{8651541}, which is out of scope for the current study.


\begin{figure}[t]
    \centering
    \includegraphics[width=0.42\textwidth]{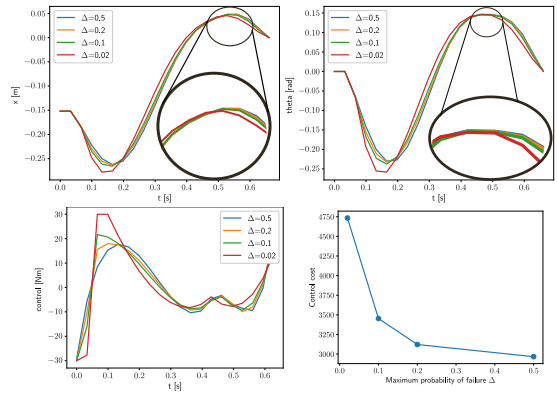}
    \caption{State and control trajectories with different $\Delta$ for the cartpole example. First, the cart moves in the negative direction to utilize the contact force $\lambda_2$ because the control input is bounded. Once the cart obtains enough $\lambda_2$, the cart is accelerated to the positive direction. We can observe the effect of our proposed chance constraints in particular around $t \in [0, 0.1]$ and $t \in [0.4, 0.5]$. When $t \in [0, 0.1]$, the mode changes from the "contact on the wall 2" to the "no contact" and the cart tries to be far from wall 2 to satisfy the CCC. When $t \in [0.4, 0.5]$, the trajectories are farther away from $x_1=0.05$ and $x_2=0.15$ as $\Delta$ decreases.}
    \label{fig:cartpole_vioprob_control}
\end{figure}

\begin{figure}[t]
    \centering
    \includegraphics[width=0.43\textwidth]{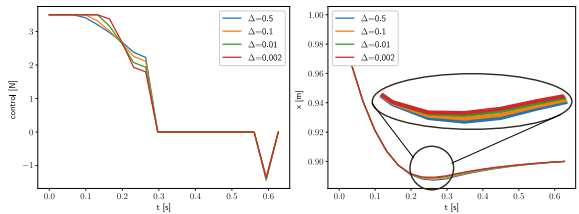}
    \caption{State and control trajectories with different $\Delta$ for a sliding box with friction. First, the box is accelerated in the positive direction. Then, the control decreases with time to regulate the box around the origin by employing the friction forces. We can observe the effect of our proposed chance constraints in particular around $t\in [0.2, 0.3]$ where the trajectories are farther away from $x_1=0.88$ as $\Delta$ decreases.}
    \label{fig:pushingbox_vioprob_control}
\end{figure}

\begin{figure}
    \centering
    \includegraphics[width=0.43\textwidth]{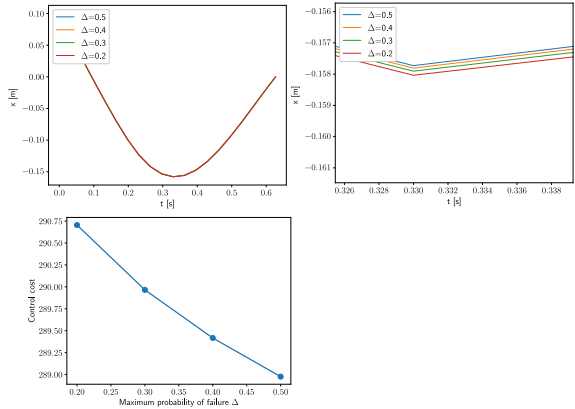}
    \caption{Time history of $x_1$ with different maximum violation probabilities $\Delta$ for dual manipulation. First, the box is pushed by the right arm in the negative direction. Next, the left arm regulates the box to the origin. In particular, around $t\in 0.2-0.3$ s, the trajectories are farther away from $x_1=-0.17$ as $\Delta$ decreases.}
    \label{fig:2dmanipulation_vioprob_x}
\end{figure}

\subsection{Monte Carlo Simulation Results}

\tab{table_cartpole}-\tab{table_dualmanipulation} show our Monte Carlo simulation results given the control sequences with different $\Delta$ from the optimization compared to the ERM method in \cite{drnach2021robust} and the CCC method in \cite{DBLP:journals/corr/abs-2105-09973}. We run the ERM method in \cite{drnach2021robust} with different weighting $\beta$ and the CCC  \cite{DBLP:journals/corr/abs-2105-09973} with violation probability $\Delta_z=0.5$. $\beta$ was chosen so that the magnitude of the ERM cost is a similar order of other costs. For a fair comparison, we regard that the constraints are violated if the chance constraints are not satisfied in our method. We regard that the constraints are violated in the ERM in \cite{drnach2021robust} and the CCC methods in \cite{DBLP:journals/corr/abs-2105-09973} if the terminal chance constraints used in our proposed method are not satisfied.

\begin{table}[t]
    \caption{{Comparison of our specified $\Delta$ in optimization, specified $\beta$ in ERM in \cite{drnach2021robust}, and the CCC in \cite{DBLP:journals/corr/abs-2105-09973} with $\Delta_z=0.5$,  and obtained $\Delta$ from the simulation of "cartpole with softwalls" over 1000 samples.}}
    \centering
    \begin{tabular}{c|c|c|c|c}
      & $\Delta=0.5$& $\Delta=0.2$ & $\Delta=0.1$ & $\Delta=0.02$\\
         \hline
         Obtained $\Delta$ & 0.190 & 0.147 & 0.085 & 0.020
    \end{tabular}
        \begin{tabular}{c|c|c|c|c}
      & $\beta=10^3$& $\beta=10^4$ & $\beta=10^5$ & $\Delta_z=0.5$\\
         \hline
         Obtained $\Delta$ & 0.75 & 1.0 & 1.0 & 0.91
    \end{tabular}
    \label{table_cartpole}
\end{table}

\begin{table}[t]
    \caption{{Comparison of our specified $\Delta$ in optimization, specified $\beta$ in ERM in \cite{drnach2021robust}, and the CCC in \cite{DBLP:journals/corr/abs-2105-09973} with $\Delta_z=0.5$,  and obtained $\Delta$ from the simulation of "a sliding box with friction" over 1000 samples.}}
    \centering
    \begin{tabular}{c|c|c|c|c}
      & $\Delta=0.5$& $\Delta=0.1$ & $\Delta=0.01$ & $\Delta=0.002$\\
         \hline
         Obtained $\Delta$ & 0.080 & 0.051 & 0.027 & 0.010
    \end{tabular}
        \begin{tabular}{c|c|c|c|c}
      & $\beta=10^3$& $\beta=10^4$ & $\beta=10^5$ & $\Delta_z=0.5$\\
         \hline
         Obtained $\Delta$ & 1.0 & 1.0 & 1.0 & 0.91
    \end{tabular}
    \label{table_pushingbox}
\end{table}

\begin{table}[t]
    \caption{{Comparison of our specified $\Delta$ in optimization, specified $\beta$ in ERM, and the CCC with $\Delta_z=0.5$,  and obtained $\Delta$ from the simulation of "dual manipulation" over 1000 samples.}}
    \centering
    \begin{tabular}{c|c|c|c|c}
      & $\Delta=0.5$& $\Delta=0.4$ & $\Delta=0.3$ & $\Delta=0.2$\\
         \hline
         Obtained $\Delta$ & 0.419 & 0.317 & 0.257 & 0.217
    \end{tabular}
        \begin{tabular}{c|c|c|c|c}
      & $\beta=10^3$& $\beta=10^4$ & $\beta=10^5$ & $\Delta_z=0.5$\\
         \hline
         Obtained $\Delta$ & 1.0 & 1.0 & 1.0 & 1.0
    \end{tabular}
    \label{table_dualmanipulation}
\end{table}

\tab{table_cartpole} shows that the empirically obtained violation probabilities are lower than the specified violation probabilities used in our proposed optimization. In contrast, the control sequences based on the ERM method in \cite{drnach2021robust} show $100 \%$ violation probabilities with $\beta=10^4, 10^5$, which are much worse than the obtained violation probabilities using our proposed method. With $\beta=10^3$, we got a relatively good violation probability. The CCC in \cite{DBLP:journals/corr/abs-2105-09973} could also show the relatively good violation probability compared to the ERM-based method with $\beta=10^4, 10^5$ but shows the worse violation probability compared to our method with $\Delta=0.5$ and the ERM with $\beta=10^3$. 
Thus, we confirm that our proposed approach satisfies chance constraints in the simulator in this example. 

\tab{table_pushingbox} shows that empirically obtained violation probabilities are lower than the specified violation probabilities used in our proposed optimization like the cartpole example, except for the cases with $\Delta=0.01, 0.002$. There are several factors that contribute to the violation of the chance constraints. Unlike the cartpole example, $F$ is not a P matrix so we can get the multiple solutions in $\lambda$, which can lead to non-Gaussian distributions. Also, even though $\epsilon$ is small, it is not zero so the actual trajectory in the simulator cannot be exactly the same as the trajectory from the optimization even with no noise. While we can ignore these effects with relatively large $\Delta$, we cannot ignore these effects anymore with the small $\Delta$. Although the planner could not satisfy the chance constraints for all $\Delta$ in this example, our method achieves much lower violation probabilities compared to the ERM in \cite{drnach2021robust} and the CCC in \cite{DBLP:journals/corr/abs-2105-09973}.
\tab{table_dualmanipulation} shows that we have the same discussion for the dual manipulators example as for the pushing a box example.

\fig{sim_cartpole} and \fig{sim_pushing_box} show that our proposed planner could successfully drive the system to the goal state. We also observe that with decreasing $\Delta$, the system trajectories move further away from state set boundaries to satisfy tighter chance constraints.
%
For \fig{2dmanipulation_vioprob_theta_mc_delta02}, while the majority of the sampled trajectories converge to the specified terminal constraints, some of them clearly converged to other states. This result also shows that the true distribution of the uncertainty for the dynamics systems with LCS is not Gaussian.

   \begin{figure}[t]
    \centering
    \includegraphics[width=0.45\textwidth]{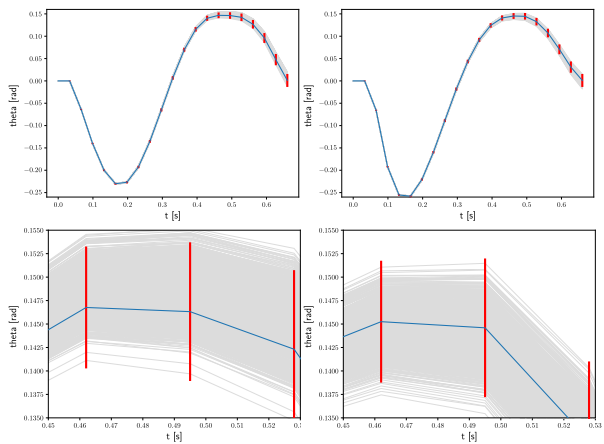}
     \caption{Simulated trajectories of $x_2$ of the cartpole example over 1000 samples with $\Delta = 0.5$ for the left column and with $\Delta=0.02$ for the right column. The bottom row enlarges the top row figures around the area where the chance constraints effect is observed. The red line shows the 99.9 $\%$ confidence interval.}
     \label{sim_cartpole}
\end{figure}

   \begin{figure}[t]
    \centering
    \includegraphics[width=0.4\textwidth]{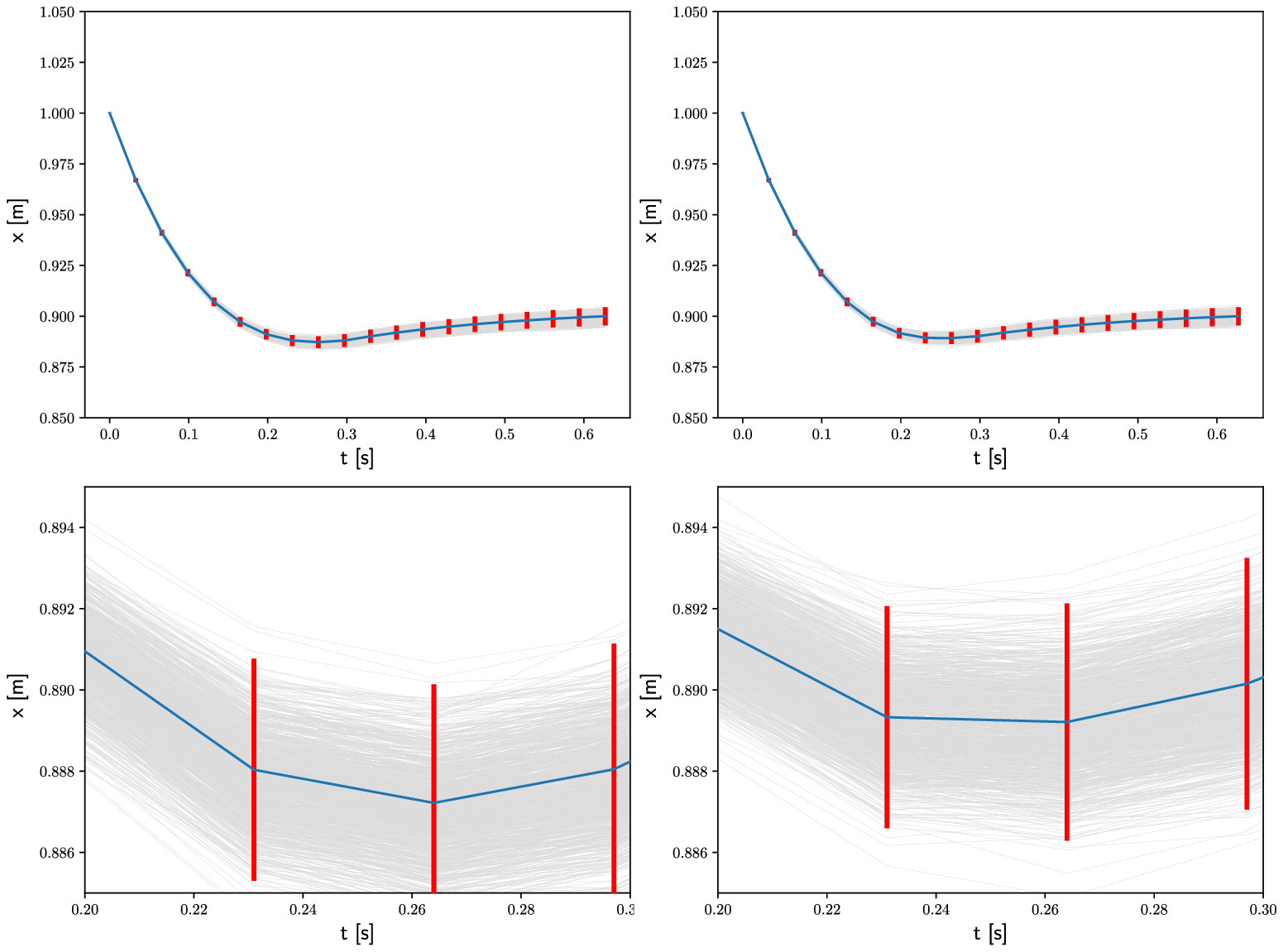}
     \caption{Simulated trajectories of $x$ of sliding box with friction example over 1000 samples with $\Delta = 0.5$ for the left column and with $\Delta=0.002$ for the right column. The bottom row enlarges the top row figures around the area where the chance constraints effect is observed. The red line shows the 99 $\%$ confidence interval.}
     \label{sim_pushing_box}
\end{figure}

\begin{figure}
    \centering
    \includegraphics[width=0.4\textwidth]{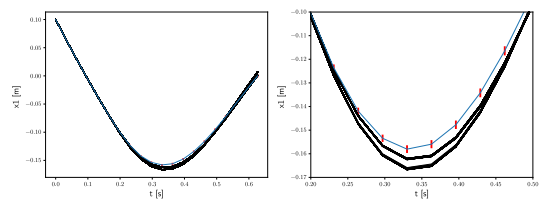}
    \caption{Simulated trajectories of $x_1$ over 1000 samples with $\Delta = 0.2$ for dual manipulation. The right figure shows the enlarged figure of the left figure. The red line shows the 99 $\%$ confidence interval.}
    \label{2dmanipulation_vioprob_theta_mc_delta02}
\end{figure}

\section{Discussion and Conclusion}\label{sec:discussion}
The hybrid dynamics of frictional interaction as well as uncertainty associated with frictional parameters make the efficient design of model-based controllers for manipulation challenging. 
In this paper, we presented a robust TO technique for contact-rich systems. We presented a formulation for chance constrained optimization for SDLCS which is solved using MIQPCC. We compared our proposed approach against other recent techniques for robust optimization for stochastic complementarity systems. We showed that our formulation leads to more robust trajectories compared to these techniques. 

In the future, we would like to relax certain assumptions in this work. We would like to propose solutions for general non-linear stochastic complementarity systems in the presence of non-Gaussian noise. In the current work, using joint chance constraints on all the variables results in conservative solutions. To consider these problems, the study of nonlinear uncertainty propagation in SNCS is required. Also, we need to solve mixed-integer non-linear programming.  We would also like investigate how we can relax the conservative solutions obtained by our proposed approach using better measures for risk. We would also like to incorporate real-time sensor input~\cite{dong2021icra} to develop algorithms for stochastic model predictive control of complex manipulation problems~\cite{yuki2022pivot}. Another interesting line of work would to be to include a Reinforcement learning algorithm to get model updates~\cite{9387127} during learning.

\bibliographystyle{IEEEtran}
\bibliography{references}
\end{document}